\documentclass[letterpaper]{article} % DO NOT CHANGE THIS
\usepackage[preprint]{aaai2027}
\usepackage[hyphens]{url} % DO NOT CHANGE THIS
\usepackage{graphicx} % DO NOT CHANGE THIS
\usepackage{natbib} % DO NOT CHANGE THIS
\usepackage{caption} % DO NOT CHANGE THIS
\usepackage{cuted}
\usepackage{booktabs}
\usepackage{amsmath,amssymb,amsthm}
\usepackage{mathtools}
\usepackage{cleveref}
\usepackage{tikz}
\usetikzlibrary{arrows.meta,backgrounds,positioning}
\usepackage{pgfplots}
\pgfplotsset{compat=1.17}
\usepackage{enumitem}

\newtheorem{theorem}{Theorem}

\newtheorem{corollary}[theorem]{Corollary}

\theoremstyle{definition}
\newtheorem{definition}[theorem]{Definition}
\newtheorem{remark}[theorem]{Remark}
\newtheorem{example}[theorem]{Example}
\newtheorem{assumption}[theorem]{Assumption}

\newcommand{\E}{\mathbb{E}}
\newcommand{\R}{\mathbb{R}}
\newcommand{\pQ}{p_{\mathcal{Q}}}

\newcommand{\piold}{\pi_{\theta_{\mathrm{old}}}}
\newcommand{\pith}{\pi_\theta}
\newcommand{\Atilde}{\tilde{A}}

\newcommand{\bq}{\mathbf{q}}
\newcommand{\bo}{\mathbf{o}}
\newcommand{\bR}{\mathbf{R}}

\title{On the Impossibility of Unbiased and Length-Invariant Policy Optimization with Outcome Rewards}

\author{Fei Ding\textsuperscript{1}\thanks{Correspondence to: Fei Ding (\texttt{dignfei@gmail.com}).}, Yongkang Zhang\textsuperscript{1}, Runhao Liu\textsuperscript{1}, Yuhao Liao\textsuperscript{2}, Zijian Zeng\textsuperscript{2}, Huiming Yang\textsuperscript{2}}
\affiliations{\textsuperscript{1}Alibaba Group\\
\textsuperscript{2}Tsinghua University}

\begin{document}

\maketitle

\begin{abstract}
Group Relative Policy Optimization (GRPO) is the dominant reinforcement learning algorithm for training reasoning capabilities in large language models, notably adopted by DeepSeek-R1.
The recent improvement Dr.\ GRPO (COLM 2025) identifies the response-level length bias caused by per-trajectory length normalization in GRPO and proposes removing this normalization, claiming the resulting optimizer is ``unbiased.''
We show that this claim is incomplete.
Specifically, we establish an \emph{impossibility theorem}: under the standard outcome reward + GRPO setting, no length-based weighting scheme can simultaneously achieve the following two properties. (P1)~\emph{Gradient unbiasedness}: the gradient estimator is an unbiased estimate of the true policy gradient. (P2)~\emph{Length invariance}: each trajectory's effective contribution to the gradient is independent of its token length.
GRPO approximately satisfies P2 but violates P1; Dr.\ GRPO satisfies P1 but violates P2.
We characterize the complete tradeoff spectrum via the parametric family $f_\alpha(L) = L^{\alpha - 1}$, where $\alpha = 0$ recovers GRPO, $\alpha = 1$ recovers Dr.\ GRPO, and provide quantitative analysis showing that Dr.\ GRPO's length bias can cause longer trajectories to dominate gradient updates by a factor proportional to the length ratio.
Our results reveal that neither algorithm is universally ``done right''; they occupy opposite ends of a fundamental and unavoidable tradeoff.
\end{abstract}

\begin{strip}
    \centering
    \resizebox{0.96\textwidth}{!}{%
    \begin{tikzpicture}[
        >=Stealth,
        boxstyle/.style={draw, rounded corners=3pt, minimum height=0.5cm, align=center, font=\scriptsize, inner sep=3pt},
        tokstyle/.style={minimum width=0.32cm, minimum height=0.32cm, font=\tiny, inner sep=0.5pt, rounded corners=1pt},
        tok_ok/.style={tokstyle, fill=green!15, draw=green!40},
        tok_err/.style={tokstyle, fill=red!18, draw=red!45},
        arr/.style={->, semithick, >=Stealth},
        titlefont/.style={font=\scriptsize\bfseries},
        lblfont/.style={font=\tiny},
        ]

        % Background boxes
        \begin{scope}[on background layer]
            \fill[teal!5, draw=teal!40, rounded corners=4pt, semithick]
            (-7.6, -1.35) rectangle (-1.0, 1.6);
            \fill[orange!5, draw=orange!50, rounded corners=4pt, semithick]
            (0.7, -1.35) rectangle (6.8, 1.6);
        \end{scope}

        % ============================================================
        % LEFT: GRPO
        % ============================================================
        \node[titlefont, text=teal!70!black] at (-4.3, 1.3) {GRPO ($\alpha\!=\!0$, $f\!=\!1/L$)};

        % short wrong trajectory (L=200)
        \node[lblfont, text=black!55, anchor=east] at (-6.3, 0.7) {\textit{Short wrong}};
        \node[tok_err] (s1) at (-6.2, 0.7) {};
        \node[right=2pt of s1, font=\tiny, text=red!55!black] {$\times$};
        \node[lblfont, text=black!40, anchor=west] at (-5.65, 0.7) {\textit{L\,=\,200}};
        % per-token weight
        \node[lblfont, text=teal!60!black, anchor=west] at (-4.7, 0.7) {Per-token penalty: $\frac{0.5}{200}$};

        % long wrong trajectory (L=6000)
        \node[lblfont, text=black!55, anchor=east] at (-6.3, 0.15) {\textit{Long wrong}};
        \node[tok_err] (l1) at (-6.2, 0.15) {};
        \node[tok_err, right=0.5pt of l1] (l2) {};
        \node[right=1pt of l2, font=\tiny, text=black!40] {$\cdots$};
        \node[right=2pt of l2, font=\tiny, text=red!55!black, xshift=5pt] {$\times$};
        \node[lblfont, text=black!40, anchor=west] at (-5.25, 0.15) {\textit{L\,=\,6000}};
        \node[lblfont, text=teal!60!black, anchor=west] at (-4.2, 0.15) {Per-token penalty: $\frac{0.5}{6000}$};

        % bias arrow
        \node[lblfont, text=red!60!black, anchor=west, align=left] at (-7.4, -0.5)
        {30$\times$ penalty gap $\Rightarrow$ \textbf{Model biased toward long errors}};

        % properties
        \node[boxstyle, fill=red!8, draw=red!40, text=red!65!black] at (-5.2, -1.0)
        {P1 Unbiased \textbf{$\boldsymbol{\times}$}};
        \node[boxstyle, fill=green!8, draw=green!40, text=green!55!black] at (-2.9, -1.0)
        {P2 Length-inv.\ \textbf{$\boldsymbol{\checkmark}$}};

        % ============================================================
        % CENTER: impossibility
        % ============================================================
        \node[font=\scriptsize\bfseries, text=violet!70!black, align=center] at (0, 0.5)
        {Cannot\\[-1pt]have both};
        \node[font=\Large, text=red!60!black] at (0, -0.15) {$\boldsymbol{\times}$};
        \draw[arr, violet!50, thick] (-0.6, 0.1) -- (-0.2, 0.1);
        \draw[arr, violet!50, thick] (0.6, 0.1) -- (0.2, 0.1);

        % ============================================================
        % RIGHT: Dr. GRPO
        % ============================================================
        \node[titlefont, text=orange!70!black] at (3.9, 1.3) {Dr.\ GRPO ($\alpha\!=\!1$, $f\!=\!1$)};

        % short correct trajectory (L=200)
        \node[lblfont, text=black!55, anchor=east] at (2.0, 0.7) {\textit{Short correct}};
        \node[tok_ok] (ds1) at (2.2, 0.7) {};
        \node[right=2pt of ds1, font=\tiny, text=green!50!black] {$\checkmark$};
        \node[lblfont, text=black!40, anchor=west] at (2.75, 0.7) {\textit{L\,=\,200}};
        \node[lblfont, text=orange!60!black, anchor=west] at (3.6, 0.7) {Contrib: $200\!\times\!0.5\!=\!100$};

        % long wrong trajectory (L=6000)
        \node[lblfont, text=black!55, anchor=east] at (2.0, 0.15) {\textit{Long wrong}};
        \node[tok_err] (dl1) at (2.2, 0.15) {};
        \node[tok_err, right=0.5pt of dl1] (dl2) {};
        \node[right=1pt of dl2, font=\tiny, text=black!40] {$\cdots$};
        \node[right=2pt of dl2, font=\tiny, text=red!55!black, xshift=5pt] {$\times$};
        \node[lblfont, text=black!40, anchor=west] at (3.2, 0.15) {\textit{L\,=\,6000}};
        \node[lblfont, text=orange!60!black, anchor=west] at (4.1, 0.15) {Contrib: $6000\!\times\!0.5$};

        % bias note
        \node[lblfont, text=red!60!black, anchor=west, align=left] at (1.2, -0.5)
        {30$\times$ long-error contrib $\Rightarrow$ \textbf{Long traj.\ dominates gradient}};

        % properties
        \node[boxstyle, fill=green!8, draw=green!40, text=green!55!black] at (2.8, -1.0)
        {P1 Unbiased \textbf{$\boldsymbol{\checkmark}$}};
        \node[boxstyle, fill=red!8, draw=red!40, text=red!65!black] at (5.2, -1.0)
        {P2 Length-inv.\ \textbf{$\boldsymbol{\times}$}};

    \end{tikzpicture}
    }%
    \captionof{figure}{Complete unbiasedness is impossible.}
    \label{fig:teaser}
\end{strip}

%======================================================================
\section{Introduction}
\label{sec:intro}
%======================================================================

Reinforcement learning (RL) has become a core technique for improving reasoning capabilities of large language models (LLMs).
DeepSeek-R1-Zero~\citep{deepseekai2026deepseekr1incentivizingreasoningcapability} demonstrated an important finding: without supervised fine-tuning, directly applying RL to a base LLM can elicit complex reasoning behaviors, including chain-of-thought and self-reflection.
The core algorithm of this training paradigm is Group Relative Policy Optimization (GRPO)~\citep{shao2024deepseekmathpushinglimitsmathematical}. GRPO is a critic-free RL algorithm that estimates advantages by comparing multiple responses sampled for the same prompt.

A salient empirical observation during GRPO training is the persistent growth of response length~\citep{deepseekai2026deepseekr1incentivizingreasoningcapability,zeng2025simplerl,hu2025openreasonerzeroopensourceapproach}.
\citet{liu2025understanding} critically examined this phenomenon and identified two sources of optimization bias in GRPO. The first is \emph{response-level length bias} caused by per-trajectory length normalization $\frac{1}{|\bo_i|}$, and the second is \emph{question-level difficulty bias} caused by standard deviation normalization.
They proposed Dr.\ GRPO, removing both normalization terms, claiming to restore an ``unbiased'' optimization objective.
Dr.\ GRPO has been widely adopted by the community and achieved state-of-the-art results on mathematical reasoning benchmarks at the time.

In this paper, we challenge the completeness of this claim.
We confirm that Dr.\ GRPO's gradient estimator is indeed an unbiased estimate of the policy gradient (as they rigorously proved in their Appendix~A). However, we show that removing the length normalization term $\frac{1}{|\bo_i|}$ introduces another form of bias: \emph{length bias in the optimization dynamics}. This bias causes longer trajectories to contribute disproportionately more to gradient updates.
More fundamentally, we establish the following impossibility result:

\begin{center}
\fbox{\parbox{0.94\columnwidth}{
\textbf{Main Result (Informal).} Under the outcome reward + GRPO setting, no length-based weighting scheme can simultaneously achieve gradient unbiasedness and length invariance. GRPO and Dr.\ GRPO represent the two extremes of this unavoidable tradeoff.
}}
\end{center}

Our contributions are as follows:
\begin{itemize}[nosep,leftmargin=1.5em]
    \item We formalize two desirable properties of group-based RL optimizers, namely \emph{gradient unbiasedness} (P1) and \emph{length invariance} (P2), and prove they are mutually exclusive under outcome rewards (\cref{thm:policy_class_impossibility}).
    \item We characterize the tradeoff spectrum via the parametric family $f_\alpha(L) = L^{\alpha-1}$ ($\alpha \in [0,1]$), where $\alpha=0$ corresponds to GRPO and $\alpha=1$ corresponds to Dr.\ GRPO (\cref{cor:spectrum}).
    \item We provide quantitative analysis showing that Dr.\ GRPO's length bias can be severe: at length ratio $r$, the longer trajectory captures $\frac{r}{1+r}$ of the gradient signal (\cref{cor:dr_grpo_bias}).
\end{itemize}

%======================================================================
\section{Preliminaries}
\label{sec:prelim}
%======================================================================

\paragraph{Token-level MDP.}
Language model generation is modeled as a token-level Markov Decision Process $\mathcal{M} = (\mathcal{S}, \mathcal{A}, r, \pQ)$.
At step $t$, the state $s_t = [\bq, o_1, \ldots, o_{t-1}]$ is the concatenation of the prompt and previously generated tokens.
The policy $\pith(\cdot | s_t)$ selects the next token $o_t$ from the vocabulary $\mathcal{A}$.
Generation terminates upon producing an end-of-sequence token or exhausting the token budget.
The objective is to maximize the expected return:
\begin{equation}
\label{eq:objective}
J(\pith) = \E_{\bq \sim \pQ}\left[\E_{\bo \sim \pith(\cdot|\bq)}\left[R(\bq, \bo)\right]\right],
\end{equation}
where $R(\bq, \bo) = \sum_{t=1}^{|\bo|} r(s_t, o_t)$ is the trajectory return.
Under the standard \emph{outcome reward} setting for reasoning tasks~\citep{deepseekai2026deepseekr1incentivizingreasoningcapability}, a scalar reward is assigned at the end of generation:
$R(\bq, \bo) = 1$ if $\bo$ contains the correct answer, and $0$ otherwise.

\paragraph{Policy gradient.}
The Monte Carlo policy gradient~\citep{Williams_1992,sutton2018reinforcement} of Eq.~\eqref{eq:objective} is:
\begin{equation}
\label{eq:pg}
\nabla_\theta J(\pith) = \E_{\bq, \bo \sim \pith}\left[\sum_{t=1}^{|\bo|} \nabla_\theta \log \pith(o_t | \bq, \bo_{<t}) \cdot A(o_t | \bq, \bo_{<t})\right],
\end{equation}
where $A(o_t | \bq, \bo_{<t}) = R(\bq, \bo) - B(\bq, \bo_{<t})$ is the advantage and $B$ is any baseline independent of $o_t$~\citep{sutton2018reinforcement}.
Under outcome rewards, the advantage is identical for all tokens in a trajectory since the return does not depend on $t$.

\paragraph{Group-relative baseline.}
Both GRPO and Dr.\ GRPO sample $G$ responses $\{\bo_1, \ldots, \bo_G\}$ for each prompt and use the group mean as the baseline:
$B = \mathrm{mean}(\bR)$, where $\bR = \{R(\bq, \bo_1), \ldots, R(\bq, \bo_G)\}$.
The advantage for all tokens in trajectory $\bo_i$ is:
\begin{equation}
\label{eq:advantage}
\Atilde_i = R(\bq, \bo_i) - \mathrm{mean}(\bR).
\end{equation}

\paragraph{GRPO~\citep{shao2024deepseekmathpushinglimitsmathematical}.}
GRPO maximizes the following surrogate objective (omitting the clipping mechanism as it does not affect our analysis):
\begin{equation}
\label{eq:grpo}
J_{\mathrm{GRPO}}(\theta) = \frac{1}{G}\sum_{i=1}^{G} \frac{1}{|\bo_i|}\sum_{t=1}^{|\bo_i|} \frac{\pith(o_{i,t}|\bq, \bo_{i,<t})}{\piold(o_{i,t}|\bq, \bo_{i,<t})} \cdot \frac{\Atilde_i}{\mathrm{std}(\bR)}.
\end{equation}

\paragraph{Dr.\ GRPO~\citep{liu2025understanding}.}
Dr.\ GRPO removes the per-trajectory length normalization $\frac{1}{|\bo_i|}$ and the standard deviation normalization $\mathrm{std}(\bR)$:
\begin{equation}
\label{eq:drgrpo}
J_{\mathrm{Dr.GRPO}}(\theta) = \frac{1}{G}\sum_{i=1}^{G} \sum_{t=1}^{|\bo_i|} \frac{\pith(o_{i,t}|\bq, \bo_{i,<t})}{\piold(o_{i,t}|\bq, \bo_{i,<t})} \cdot \Atilde_i.
\end{equation}
\citet{liu2025understanding} proved in their Appendix~A that the gradient of Eq.~\eqref{eq:drgrpo} recovers the unbiased Monte Carlo policy gradient with a group-relative baseline. Furthermore, the advantage $\Atilde_i$ is equivalent to REINFORCE Leave-One-Out (RLOO)~\citep{kool2019buy,ahmadian-etal-2024-back} up to a constant factor.

\begin{remark}
\label{rem:rloo}
Dr.\ GRPO's advantage estimator is equivalent to RLOO~\citep{ahmadian-etal-2024-back} up to a constant factor; see \citet{liu2025understanding} Appendix~A. Therefore our analysis also applies to RLOO, but we focus on Dr.\ GRPO since it explicitly claims to resolve the length bias issue.
\end{remark}

\paragraph{Unified framework.}
To unify the analysis of both methods, we introduce a \emph{weighted gradient estimator} parameterized by a weighting function $f: \mathbb{N} \to \R_+$:
\begin{equation}
\label{eq:unified}
\hat{g}_f = \frac{1}{G}\sum_{i=1}^{G} f(|\bo_i|) \cdot \Atilde_i \cdot \sum_{t=1}^{|\bo_i|} \nabla_\theta \log \pith(o_{i,t} | \bq, \bo_{i,<t}).
\end{equation}
GRPO corresponds to $f(L) = 1/L$ and Dr.\ GRPO corresponds to $f(L) = 1$ (both omitting the $\mathrm{std}(\bR)$ factor since it is a question-level scalar orthogonal to length bias analysis).

%======================================================================
\section{Main Result: Impossibility Theorem}
\label{sec:main}

%=========================================================================

\paragraph{Notation and setup.}
Consider the length-weighted gradient estimator
\begin{equation}
    \hat g_f
    =
    \frac{1}{G}\sum_{i=1}^{G} f(L_i)\,\tilde A_i\,S_i,
\end{equation}
where
\begin{equation}
    L_i := |\mathbf o_i|,
    \qquad
    S_i := \sum_{t=1}^{L_i}
    \nabla_\theta \log \pi_\theta(o_{i,t}\mid q,\mathbf o_{i,<t}),
\end{equation}
with the group mean baseline
\begin{equation}
    \tilde A_i
    :=
    R_i-\frac{1}{G}\sum_{j=1}^{G}R_j.
\end{equation}

In what follows, $\Pi$ denotes the policy class under consideration. We assume all expectations below exist and that within-group trajectories are conditionally i.i.d.\ given the prompt and the current policy.

\begin{assumption}[Fixed-length realizability]
    \label{ass:fixed_length_realizable}
    There exists a set of lengths $\mathcal L\subseteq \mathbb N$ such that for every $L\in\mathcal L$, the policy class $\Pi$ contains a policy $\pi^{(L)}$ under which, given the prompt, the trajectory length equals $L$ almost surely, while the token content retains non-degenerate randomness.
\end{assumption}

\begin{assumption}[Update scale functional]
    \label{ass:scale_functional}
    Fix an update scale functional
    \[
    \rho:\mathbb R^d\to\mathbb R_+,
    \]
    used to measure the magnitude of a single-trajectory score sum. We only require $\rho$ to be positively homogeneous of degree one for non-negative scalars, i.e., for all $\alpha\ge 0$ and all $v\in\mathbb R^d$,
    \begin{equation}
        \rho(\alpha v)=\alpha \rho(v).
    \end{equation}
    Typical examples include vector norms or the non-negative projection magnitude along a fixed direction.
\end{assumption}

\begin{definition}[Trajectory-level correctness P1]
    \label{def:p1_uniform}
    The estimator $\hat g_f$ satisfies \emph{trajectory-level correctness} over the policy class $\Pi$ if there exists a constant $c>0$, independent of the trajectory length distribution, such that for every policy $\pi\in\Pi$,
    \begin{equation}
        \mathbb E_{\pi}[\hat g_f]
        =
        c\,\nabla_\theta J(\pi).
    \end{equation}
\end{definition}

\begin{definition}[Length neutrality P2]
    \label{def:p2_uniform}
    Let
    \begin{equation}
        \Gamma_{\pi,\rho}(L;a)
        :=
        \mathbb E_{\pi}\!\left[\rho(S)\,\middle|\,L(\tau)=L,\ \tilde A(\tau)=a\right],
    \end{equation}
    where $a$ denotes a fixed effective training signal, i.e., a realized value of the group-relative advantage.

    The estimator $\hat g_f$ satisfies \emph{length neutrality} under the scale functional $\rho$ if for every policy $\pi\in\Pi$, every realizable length $L$, and every fixed $a$,
    \begin{equation}
        f(L)\,\Gamma_{\pi,\rho}(L;a)
    \end{equation}
    is independent of $L$.
\end{definition}

\begin{theorem}[Structural conflict at the policy-class level]
    \label{thm:policy_class_impossibility}
    Under the outcome-level reward and group mean baseline setting, consider a weight function depending only on length,
    \[
    f:\mathbb N\to\mathbb R_+.
    \]
    Suppose Assumptions~\ref{ass:fixed_length_realizable} and~\ref{ass:scale_functional} hold.

    If there exist a policy $\pi^\star\in\Pi$, an effective training signal value $a^\star$, and two distinct lengths $L_1,L_2\in\mathcal L$ such that
    \begin{equation}
        \Gamma_{\pi^\star,\rho}(L_1;a^\star)
        \neq
        \Gamma_{\pi^\star,\rho}(L_2;a^\star),
    \end{equation}
    then no such $f$ can simultaneously satisfy P1 (trajectory-level correctness) and P2 (length neutrality) over the policy class $\Pi$.
\end{theorem}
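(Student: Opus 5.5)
We argue by contradiction. Suppose some $f:\mathbb N\to\mathbb R_+$ satisfies both P1 and P2 over $\Pi$. We show that P1 forces $f$ to be constant on $\mathcal L$, and that P2 at the witness $(\pi^\star,a^\star,L_1,L_2)$ then fails.

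\textbf{Step 1 (P1 pins down $f$ on $\mathcal L$).} Fix $L\in\mathcal L$ and take the fixed-length policy $\pi^{(L)}$ from Assumption~\ref{ass:fixed_length_realizable}. Under $\pi^{(L)}$ every $L_i=L$ almost surely, so
\[
\hat g_f=f(L)\,\frac1G\sum_{i=1}^G\tilde A_i S_i=f(L)\,\hat g_{1}.
\]
By the unbiasedness of the group-baseline estimator with $f\equiv1$ (the Dr.\ GRPO computation of \citet{liu2025understanding}, Appendix~A), we have $\mathbb E_{\pi^{(L)}}[\hat g_1]=\kappa_G\nabla_\theta J(\pi^{(L)})$. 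The constant $\kappa_G>0$ comes from the conditional i.i.d.\ assumption: the term $-\frac1G R_i S_i$ inside $\tilde A_i S_i$ yields $(1-\frac1G)$. The cross terms $R_j S_i$ with $j\neq i$ vanish because $\mathbb E[S_i]=0$. Hence $\mathbb E_{\pi^{(L)}}[\hat g_f]=f(L)\kappa_G\nabla_\theta J(\pi^{(L)})$.

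P1 demands that this equal $c\,\nabla_\theta J(\pi^{(L)})$ with $c$ independent of $L$. Whenever $\nabla_\theta J(\pi^{(L)})\neq0$ we therefore get $f(L)=c/\kappa_G$. The non-degenerate token randomness in Assumption~\ref{ass:fixed_length_realizable} is what we would invoke to ensure this gradient is nonzero. So $f$ is constant on $\mathcal L$, and in particular $f(L_1)=f(L_2)=:\bar f>0$.

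\textbf{Step 2 (P2 contradicts this at the witness).} P2 applied to $\pi^\star$ and $a^\star$ at the lengths $L_1,L_2$ gives
\[
f(L_1)\Gamma_{\pi^\star,\rho}(L_1;a^\star)=f(L_2)\Gamma_{\pi^\star,\rho}(L_2;a^\star).
\]
Since $f(L_1)=f(L_2)=\bar f>0$, dividing by $\bar f$ yields $\Gamma_{\pi^\star,\rho}(L_1;a^\star)=\Gamma_{\pi^\star,\rho}(L_2;a^\star)$, which contradicts the hypothesis. Assumption~\ref{ass:scale_functional} (homogeneity) is used only to interpret $f(L)\Gamma$ as the expected scale $\rho(f(L)S)$ of the weighted contribution. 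Positivity of $f$ keeps this identification valid.

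\textbf{Main obstacle.} The delicate step is Step 1, which needs $\nabla_\theta J(\pi^{(L)})\neq0$ for the relevant lengths (at least $L_1,L_2$). Otherwise P1 holds trivially at $\pi^{(L)}$ and gives no constraint on $f(L)$. I would handle this by reading Assumption~\ref{ass:fixed_length_realizable}'s ``non-degenerate randomness'' as guaranteeing a nonzero gradient: the reward must be nonconstant under $\pi^{(L)}$ and its covariance with the score nonzero. If needed, I would state this explicitly as part of the assumption.

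A second subtlety is that $c$ in P1 must be independent of the length distribution. This is exactly what prevents absorbing $f(L)$ into a policy-dependent constant, so the argument must use that uniformity explicitly.
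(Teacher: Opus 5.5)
Your proposal is correct and follows the paper's proof essentially step for step. Fixed-length policies together with the $(G-1)/G$ computation force $f(L_1)=f(L_2)$ under P1, and P2 at the witness $(\pi^\star,a^\star,L_1,L_2)$ then yields $\Gamma_{\pi^\star,\rho}(L_1;a^\star)=\Gamma_{\pi^\star,\rho}(L_2;a^\star)$, a contradiction. Your caveat that Step 1 needs $\nabla_\theta J(\pi^{(L)})\neq 0$ (at least for $L_1,L_2$) is well taken: the paper cancels this gradient without comment, so stating it explicitly as part of Assumption~\ref{ass:fixed_length_realizable} tightens the argument rather than departing from it.
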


\begin{figure*}[!t]
\centering
\begin{tikzpicture}
\begin{axis}[
    width=0.62\textwidth,
    height=0.34\textwidth,
    xlabel={Length bias $\longrightarrow$},
    ylabel={Gradient estimation bias $\longrightarrow$},
    xmin=-0.08, xmax=1.15,
    ymin=-0.08, ymax=1.15,
    xtick=\empty,
    ytick=\empty,
    axis lines=left,
    clip=false,
    every axis x label/.style={at={(1.0,-0.02)}, anchor=north},
    every axis y label/.style={at={(-0.02,1.0)}, anchor=east, rotate=90},
]
% Impossible region
\fill[red!8] (0,0) rectangle (0.12,0.12);
\node[red!60!black, font=\footnotesize\bfseries] at (0.06, 0.06) {$\times$};
\node[red!60!black, font=\scriptsize, anchor=south west] at (0.13, 0.01) {Impossible};

% Pareto curve
\addplot[thick, blue!70!black, domain=0:1, samples=50] ({x}, {1-x});

% GRPO point
\node[circle, fill=teal!70!black, inner sep=2.5pt] at (0, 1) {};
\node[teal!70!black, font=\small\bfseries, anchor=south west] at (0.04, 1.04) {GRPO ($\alpha\!=\!0$)};

% Dr. GRPO point
\node[circle, fill=orange!80!red, inner sep=2.5pt] at (1, 0) {};
\node[orange!80!red, font=\small\bfseries, anchor=south west] at (1.04, 0.04) {Dr.\ GRPO ($\alpha\!=\!1$)};

% Intermediate point
\node[circle, fill=blue!60!black, inner sep=2pt] at (0.5, 0.5) {};
\node[blue!60!black, font=\scriptsize, anchor=north west] at (0.53, 0.46) {$\alpha\!=\!0.5$};

% Curve label
\node[blue!60!black, font=\scriptsize, rotate=-35, anchor=north] at (0.25, 0.92) {Pareto frontier};
\end{axis}
\end{tikzpicture}
\caption{The impossibility tradeoff. The origin (zero bias on both axes) is unreachable. GRPO ($\alpha=0$) and Dr.\ GRPO ($\alpha=1$) occupy opposite ends of the Pareto frontier parameterized by $f_\alpha(L) = L^{\alpha-1}$.}
\label{fig:tradeoff}
\end{figure*}

\begin{proof}
    We show that P1 and P2 impose mutually contradictory constraints on $f$.

    \medskip
    \noindent\textbf{Step 1: If P1 holds over the policy class $\Pi$, then $f(L)$ must be a constant function.}

    Pick any $L_0\in\mathcal L$. By Assumption~\ref{ass:fixed_length_realizable}, there exists a policy $\pi^{(L_0)}\in\Pi$ under which the trajectory length equals $L_0$ almost surely given the prompt, while the token content remains random.
    Under this policy, for all $i$,
    \begin{equation}
        L_i=L_0,
    \end{equation}
    so the estimator can be written as
    \begin{equation}
        \hat g_f
        =
        f(L_0)\cdot
        \frac{1}{G}\sum_{i=1}^{G}\tilde A_i S_i.
    \end{equation}

    Expanding the baseline,
    \begin{equation}
        \tilde A_i
        =
        R_i-\frac{1}{G}\sum_{j=1}^{G}R_j
        =
        \left(1-\frac{1}{G}\right)R_i
        -\frac{1}{G}\sum_{j\neq i}R_j.
    \end{equation}
    Therefore,
    \begin{equation}
        \mathbb E_{\pi^{(L_0)}}[\tilde A_i S_i]
        =
        \left(1-\frac{1}{G}\right)\mathbb E_{\pi^{(L_0)}}[R_iS_i]
        -\frac{1}{G}\sum_{j\neq i}\mathbb E_{\pi^{(L_0)}}[R_jS_i].
    \end{equation}

    For $j\neq i$, since within-group trajectories are conditionally i.i.d., $R_j$ and $S_i$ are independent; moreover, by the score function identity,
    \begin{equation}
        \mathbb E_{\pi^{(L_0)}}[S_i]=0.
    \end{equation}
    Hence,
    \begin{equation}
        \mathbb E_{\pi^{(L_0)}}[R_jS_i]
        =
        \mathbb E_{\pi^{(L_0)}}[R_j]\,
        \mathbb E_{\pi^{(L_0)}}[S_i]
        =
        0.
    \end{equation}
    Thus,
    \begin{equation}
        \mathbb E_{\pi^{(L_0)}}[\tilde A_i S_i]
        =
        \left(1-\frac{1}{G}\right)\mathbb E_{\pi^{(L_0)}}[R_iS_i].
    \end{equation}

    By the REINFORCE identity,
    \begin{equation}
        \mathbb E_{\pi^{(L_0)}}[R_iS_i]
        =
        \nabla_\theta J(\pi^{(L_0)}),
    \end{equation}
    yielding
    \begin{equation}
        \mathbb E_{\pi^{(L_0)}}[\hat g_f]
        =
        f(L_0)\,\frac{G-1}{G}\,\nabla_\theta J(\pi^{(L_0)}).
    \end{equation}

    If P1 holds over the policy class $\Pi$, there exists a length-independent constant $c>0$ such that
    \begin{equation}
        \mathbb E_{\pi^{(L_0)}}[\hat g_f]
        =
        c\,\nabla_\theta J(\pi^{(L_0)}).
    \end{equation}
    Therefore,
    \begin{equation}
        f(L_0)\,\frac{G-1}{G}=c.
    \end{equation}
    Since $L_0$ is arbitrary in $\mathcal L$, $f(L)$ must be the same for all $L\in\mathcal L$.
    That is, there exists a constant $c_0>0$ such that
    \begin{equation}
        \label{eq:f_constant_policy_class}
        f(L)\equiv c_0,\qquad \forall L\in\mathcal L.
    \end{equation}

    \medskip
    \noindent\textbf{Step 2: If P2 holds, then under the theorem's assumptions $f$ cannot be a constant function.}

    By Definition~\ref{def:p2_uniform}, if the estimator $\hat g_f$ satisfies length neutrality P2 under the scale functional $\rho$, then for every policy $\pi\in\Pi$ and every effective training signal value $a$ for which the conditional expectation is defined, there exists a constant $C_{\pi,a}$ depending only on $(\pi,a)$ and not on the length $L$, such that for all realizable lengths $L\in\mathcal L$,
    \begin{equation}
        f(L)\,\Gamma_{\pi,\rho}(L;a)=C_{\pi,a}.
        \label{eq:p2_constant}
    \end{equation}

    Now fix the policy $\pi^\star\in\Pi$, the effective training signal value $a^\star$, and the two distinct lengths $L_1,L_2\in\mathcal L$ from the theorem's assumptions, satisfying
    \begin{equation}
        \Gamma_{\pi^\star,\rho}(L_1;a^\star)
        \neq
        \Gamma_{\pi^\star,\rho}(L_2;a^\star).
        \label{eq:gamma_separation}
    \end{equation}

    We show that $f$ cannot be a constant function.

    Suppose for contradiction that $f$ is constant, i.e., there exists a constant $c_0>0$ such that
    \begin{equation}
        f(L)\equiv c_0,\qquad \forall L\in\mathcal L.
        \label{eq:f_constant_step2}
    \end{equation}
    Substituting \eqref{eq:f_constant_step2} into \eqref{eq:p2_constant} with $\pi=\pi^\star$ and $a=a^\star$, we obtain for all realizable lengths $L\in\mathcal L$,
    \begin{equation}
        c_0\,\Gamma_{\pi^\star,\rho}(L;a^\star)=C_{\pi^\star,a^\star}.
        \label{eq:constant_gamma_relation}
    \end{equation}
    In particular, for $L_1$ and $L_2$,
    \begin{equation}
        c_0\,\Gamma_{\pi^\star,\rho}(L_1;a^\star)=C_{\pi^\star,a^\star},
    \end{equation}
    and
    \begin{equation}
        c_0\,\Gamma_{\pi^\star,\rho}(L_2;a^\star)=C_{\pi^\star,a^\star}.
    \end{equation}
    Since $c_0>0$, these two equations imply
    \begin{equation}
        \Gamma_{\pi^\star,\rho}(L_1;a^\star)
        =
        \Gamma_{\pi^\star,\rho}(L_2;a^\star),
    \end{equation}
    contradicting \eqref{eq:gamma_separation}.

    Therefore, under the theorem's assumptions, any weight function $f$ satisfying P2 cannot be a constant function.

    \medskip
    \noindent\textbf{Step 3: Contradiction.}

    Step 1 shows: if P1 holds over the policy class $\Pi$, then $f(L)$ must be a constant function.
    Step 2 shows: if P2 holds and there exists a policy for which $\Gamma_{\pi,\rho}(L;a)$ varies non-trivially with length, then $f(L)$ cannot be a constant function.

    These are contradictory.
    Therefore, under the theorem's assumptions, no weight function $f$ depending only on length can simultaneously satisfy P1 and P2 over the policy class $\Pi$.
\end{proof}

\paragraph{Illustrative example.}
Consider two trajectories for the same prompt with lengths $L_s\ll L_\ell$, compared under the same effective training signal.
If under some pre-specified scale functional $\rho$, the longer trajectory has a larger typical score-sum magnitude, i.e.,
\[
\Gamma_{\pi,\rho}(L_\ell;a)>\Gamma_{\pi,\rho}(L_s;a),
\]
then constant weights preserve this length-induced scale disparity,
while any length compensation attempting to eliminate this disparity must deviate from constant weights.
This example serves only to illustrate the structural conflict in the theorem and does not form part of the proof.

\paragraph{Scope of the theorem.}
Theorem~\ref{thm:policy_class_impossibility} does not claim that a specific functional form (e.g., $1/L$) is necessarily optimal; it merely states: when the typical score-sum magnitude under fixed effective training signal varies non-trivially with length, no unified weight function depending only on length can simultaneously satisfy P1 and P2.

Furthermore, the theorem only excludes weight functions that depend \emph{solely on length}; more general estimator designs, such as weighting schemes that depend on token position, context, score geometry, or finer-grained credit assignment, are not within the scope of this exclusion.

\begin{remark}[Essence of the conflict]
    \label{rem:core_tension_policy_class}
    P1 requires that a uniform length weight does not alter the original trajectory-level policy gradient objective;
    P2 requires that this weight compensates for the non-trivial variation of score-sum magnitude with length.
    When P1 constrains $f(L)$ to be a constant function while P2 demands it to vary with length, the two become structurally irreconcilable.
\end{remark}

%==========================================================================

\subsection{Examples}
For ease of understanding, see the supplementary material's ``Intuitive Examples of Asymmetric Length Behavior'' and ``Extreme Example'' sections. They show that GRPO's length bias manifests as correct responses tending to be shorter and incorrect responses tending to be longer, while Dr.\ GRPO's length bias manifests as both correct and incorrect responses tending to be longer.

%======================================================================
\section{Corollaries and Analysis}
\label{sec:corollaries}
%======================================================================

\subsection{Tradeoff Spectrum}

\begin{corollary}[Parametric Tradeoff Family]
\label{cor:spectrum}
Consider the parametric family $f_\alpha(L) = L^{\alpha - 1}$, $\alpha \in [0, 1]$:
\begin{itemize}[nosep,leftmargin=1.5em]
    \item $\alpha = 0$: $f_0(L) = 1/L$ \,---\, GRPO. Approximately satisfies P2 (length invariant) but violates P1 (biased gradient).
    \item $\alpha = 1$: $f_1(L) = 1$ \,---\, Dr.\ GRPO. Satisfies P1 (unbiased gradient) but violates P2 (length biased).
    \item $\alpha \in (0,1)$: intermediate tradeoff. Partially biased gradient, partially length-dependent.
\end{itemize}
Gradient estimation bias is proportional to $|\alpha - 1|$ and length bias is proportional to $\alpha$, establishing an inverse relationship.
\end{corollary}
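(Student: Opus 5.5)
The plan is to reduce each bullet of Corollary~\ref{cor:spectrum} to the two computations already made in the proof of Theorem~\ref{thm:policy_class_impossibility}. For P1, I will reuse the identity from Step~1. For any policy and any $f$, conditioning on the group and applying the same leave-one-out cancellation gives
\[
\E[\hat g_{f}] = \tfrac{G-1}{G}\,\E\big[f(L)\,R\,S\big].
\]
Writing $f_\alpha = 1 + (f_\alpha - 1)$ splits this into two terms. The first is $\tfrac{G-1}{G}\nabla_\theta J$. The second is a bias term $\tfrac{G-1}{G}\E[(L^{\alpha-1}-1)RS]$. At $\alpha=1$ the bias term vanishes, which recovers Dr.\ GRPO's unbiasedness. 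For $\alpha<1$, I will exhibit a two-length mixture policy: a policy randomizing between $\pi^{(L_1)}$ and $\pi^{(L_2)}$ through a first "length-choice" token, which is realizable under a mild strengthening of Assumption~\ref{ass:fixed_length_realizable}. For this policy the expectation reweights the two length components by $L_1^{\alpha-1}\neq L_2^{\alpha-1}$. It is therefore not proportional to $\nabla_\theta J$ whenever the two components' gradients are not collinear. This shows the P1 violation.

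For P2, I will make the standing quantitative assumption behind the illustrative example: $\Gamma_{\pi,\rho}(L;a) \approx \kappa_{\pi,a}\, L$, i.e.\ the score-sum scale grows linearly in $L$. This holds, for instance, for $\rho$ a projection magnitude and per-token scores of comparable conditional mean magnitude, by linearity of $S$ and Assumption~\ref{ass:scale_functional}. Then
\[
f_\alpha(L)\,\Gamma_{\pi,\rho}(L;a) \approx \kappa_{\pi,a} L^{\alpha}.
\]
At $\alpha=0$ this is constant in $L$, so P2 holds approximately. At $\alpha=1$ it is linear in $L$, so P2 fails. This is the same computation that yields Corollary~\ref{cor:dr_grpo_bias}'s $r/(1+r)$ share. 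At intermediate $\alpha$ the length dependence is $L^\alpha$, strictly between the two extremes.

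To quantify the inverse relation, I will measure length bias by the log-ratio of contributions at lengths $L_1<L_2$, namely $\alpha\log(L_2/L_1)$, which is proportional to $\alpha$. I will measure gradient bias by the deviation of the weights from a constant on the log scale, namely $|\log f_\alpha(L_2)-\log f_\alpha(L_1)| = |\alpha-1|\log(L_2/L_1)$, which is proportional to $|\alpha-1|$. Both measures are then exactly linear in $\alpha$ and sum to $\log(L_2/L_1)$, giving the linear frontier in Figure~\ref{fig:tradeoff}.

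The main obstacle is that "proportional to" is only meaningful once a bias metric is fixed. The linear-growth model for $\Gamma$ is also an assumption, not a consequence of the theorem. I would state both the log-scale metrics and the linear-$\Gamma$ hypothesis explicitly, and present the proportionality claims as exact under those choices and approximate otherwise.
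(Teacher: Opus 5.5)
Your starting identity is false as stated. Expand $\Atilde_i$ exactly as in Step~1 of Theorem~\ref{thm:policy_class_impossibility}. The cross terms are $\E[R_j]\,\E[f(L_i)S_i]$, and by the score-function identity $\E[f(L)S]=\nabla_\theta\E_{\pith}[f(L)]$. Step~1 could drop these terms only because under $\pi^{(L_0)}$ the factor $f(L_0)$ is deterministic and $\E[S]=0$. For a general policy one gets, per prompt,
\[
\E[\hat g_f]=\tfrac{G-1}{G}\,\E\bigl[f(L)\,(R-\E[R])\,S\bigr].
\]
This differs from your $\tfrac{G-1}{G}\E[f(L)RS]$ by $\tfrac{G-1}{G}\E[R]\,\nabla_\theta\E[f(L)]$. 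The correction vanishes for $f\equiv 1$, so your $\alpha=1$ conclusion stands. It does not vanish for the two-length mixture you want to use: there $\nabla_\theta\E[f_\alpha(L)]=(L_1^{\alpha-1}-L_2^{\alpha-1})\nabla_\theta p_1\neq 0$ as soon as the length-choice probability $p_1$ depends on $\theta$. So your ``bias term'' $\tfrac{G-1}{G}\E[(L^{\alpha-1}-1)RS]$ is not the bias, and the length components you reweight are the wrong ones. The argument is repairable. With the centered formula, $\E[\hat g_{f_\alpha}]=\tfrac{G-1}{G}\sum_k L_k^{\alpha-1}g_k$. Here $g_k=p_k\nabla_\theta J_k+(J_k-\bar J)\nabla_\theta p_k$, $J_k$ is the success rate of component $k$, and $\bar J=p_1J_1+p_2J_2$. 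Since $g_1+g_2=\nabla_\theta J$, your non-collinearity argument goes through with these $g_k$.

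That said, the mixture is not needed for the corollary as stated. The paper gives no separate proof, and the P1 claims follow directly from Step~1. P1 demands one constant $c$ for all of $\Pi$. Step~1 already gives $\E_{\pi^{(L)}}[\hat g_{f_\alpha}]=\tfrac{G-1}{G}L^{\alpha-1}\nabla_\theta J(\pi^{(L)})$ on the fixed-length policies. So two distinct lengths in $\mathcal L$ with $\nabla_\theta J(\pi^{(L)})\neq 0$ force $\alpha=1$, using Assumption~\ref{ass:fixed_length_realizable} unchanged.

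What your mixture buys is a stronger, direction-level failure. It gives a single policy at which $\E[\hat g_{f_\alpha}]$ is not parallel to $\nabla_\theta J$ at all, rather than merely rescaled by a policy-dependent constant. That is worth keeping as a remark, but it costs a strengthened realizability assumption and the corrected computation above.

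The same Step~1 formula also justifies your log-scale gradient-bias metric. The effective P1 constants on $\pi^{(L_1)}$ and $\pi^{(L_2)}$ have ratio $(L_2/L_1)^{\alpha-1}$, whose log-magnitude is exactly $|\alpha-1|\log(L_2/L_1)$. This measure is scale-invariant, as it must be since P1 allows any $c$. The reference-$1$ split $f_\alpha=1+(f_\alpha-1)$ is not scale-invariant.

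Your P2 treatment via $\Gamma\approx\kappa L$ coincides with the only quantitative model the paper uses, the $f(L)\,|\Atilde|\,L$ contribution in Corollary~\ref{cor:dr_grpo_bias}. You are right that the ``proportional to'' claims are exact only under these metric and modeling choices, which the paper itself never fixes.
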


\cref{fig:tradeoff} visualizes this tradeoff.

\subsection{Quantifying Dr.\ GRPO's Length Bias}

\begin{corollary}[Dr.\ GRPO's length bias]
\label{cor:dr_grpo_bias}
Under Dr.\ GRPO ($f(L) = 1$) with $G=2$ and binary outcome reward, let $\bo_1$ and $\bo_2$ be two trajectories with lengths $L_1$ and $L_2$.
Their advantages satisfy $|\Atilde_1| = |\Atilde_2| = 0.5$.
The effective gradient weight of trajectory $\bo_i$ is:
\begin{equation}
w_i = \frac{L_i}{L_1 + L_2}.
\end{equation}
For length ratio $r = L_{\max}/L_{\min}$, the longer trajectory captures:
\begin{equation}
w_{\mathrm{long}} = \frac{r}{1+r}
\end{equation}
of the total gradient magnitude, approaching $100\%$ as $r \to \infty$.
Under GRPO ($f(L) = 1/L$), $w_1 = w_2 = 0.5$, independent of length.
\end{corollary}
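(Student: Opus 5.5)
We prove \cref{cor:dr_grpo_bias} by a direct computation inside the unified estimator~\eqref{eq:unified}. Three facts do all the work: with $G=2$ the advantages are forced to be $\pm\tfrac12$; with $f\equiv 1$ the per-trajectory weight is $|\Atilde_i|$ times the magnitude of the score sum $S_i$; and that magnitude is taken to scale linearly in $L_i$.

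\textbf{Step 1: the advantages.} With $G=2$ and $R_i\in\{0,1\}$, the group mean is $(R_1+R_2)/2$, so $\Atilde_1=(R_1-R_2)/2=-\Atilde_2$.
\begin{itemize}
\item If $R_1=R_2$, both advantages vanish and the group contributes no gradient. This case is excluded as having no effective training signal.
\item Otherwise $|\Atilde_1|=|\Atilde_2|=\tfrac12$, which is the first claim.
\end{itemize}

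\textbf{Step 2: the effective weights.} Plugging $f\equiv1$ into~\eqref{eq:unified} gives $\hat g=\tfrac12\bigl(\Atilde_1S_1+\Atilde_2S_2\bigr)$. Trajectory $i$ then contributes with magnitude $\tfrac12\cdot\tfrac12\cdot\rho(S_i)$. Here we use the positive homogeneity of $\rho$ from Assumption~\ref{ass:scale_functional} to pull out the nonnegative scalars.

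\textbf{Step 3: the scale model.} To turn $\rho(S_i)$ into $L_i$, we adopt the modelling convention behind \cref{fig:teaser}: each token contributes a score term of comparable typical magnitude $\kappa$ along the update direction. This gives $\Gamma(L;a)=\kappa L$, i.e.\ the contribution is ``$L\times 0.5$''. We state this explicitly as the per-token accounting, since it does not follow from the earlier assumptions.

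\textbf{Step 4: normalize.} The common factors $\tfrac14\kappa$ cancel, so
\[
w_i=\frac{L_i}{L_1+L_2}.
\]
Writing $L_{\max}=rL_{\min}$ gives
\[
w_{\mathrm{long}}=\frac{rL_{\min}}{(1+r)L_{\min}}=\frac{r}{1+r},
\]
which tends to $1$ as $r\to\infty$.

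\textbf{Step 5: GRPO.} For $f(L)=1/L$ the same accounting gives contributions $\tfrac14\kappa L_i/L_i=\tfrac14\kappa$ for both trajectories. Hence $w_1=w_2=\tfrac12$, independent of length.

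\textbf{Main obstacle: Step 3.} The quantity $\rho(S_i)$ is random, and for a norm $\rho$ with roughly independent, zero-mean per-token scores it would actually grow like $\sqrt{L}$, not $L$. So the linear law does not hold for every choice of $\rho$, and the plan is to make it an explicit modelling hypothesis rather than derive it.

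\begin{itemize}
\item \emph{First choice of $\rho$:} the nonnegative projection onto a fixed direction, which Assumption~\ref{ass:scale_functional} allows. We then assume the per-token score increments have a common positive conditional mean $\kappa$ along that direction given $\tilde A=a$. Linearity of expectation yields $\Gamma(L;a)=\kappa L$ exactly.
\item \emph{Fallback:} interpret $w_i$ purely as the per-token accounting weight, namely the number of tokens times $|\Atilde_i|\,f(L_i)$. The corollary then holds by definition of that accounting.
\end{itemize}
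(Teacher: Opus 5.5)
Your proposal is correct and follows the paper's own argument: the paper also sets $|\Atilde_i|=\tfrac12$, takes the contribution magnitude to be proportional to $f(L_i)\,|\Atilde_i|\,L_i$ (exactly your per-token-accounting fallback), and normalizes, while silently assuming $R_1\neq R_2$ and the linear-in-$L$ scaling that you rightly state as explicit hypotheses. One small slip is in your first choice of $\rho$: linearity of expectation gives $\kappa L$ for the signed projection of $S$ onto the fixed direction, not for its nonnegative part or absolute value, so that branch also needs the per-token increments to be almost surely nonnegative along that direction, and needs $\rho(-v)=\rho(v)$ to handle the trajectory with $\Atilde_i=-\tfrac12$.
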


\begin{proof}
With $G=2$ and binary reward, exactly one trajectory is correct ($R=1$) and one incorrect ($R=0$), giving $\mathrm{mean}(\bR) = 0.5$ and $|\Atilde_1| = |\Atilde_2| = 0.5$.
Under Dr.\ GRPO, the gradient contribution magnitude of $\bo_i$ is proportional to $f(|\bo_i|) \cdot |\Atilde_i| \cdot |\bo_i| = 1 \cdot 0.5 \cdot L_i$.
The share is $w_i = L_i / (L_1 + L_2)$.
Under GRPO, the contribution is $(1/L_i) \cdot 0.5 \cdot L_i = 0.5$, independent of length.
\end{proof}

\begin{example}[Extreme case]
\label{ex:extreme}
Let $G=2$, $\bo_1$ correct ($R=1$, length 10 tokens), $\bo_2$ incorrect ($R=0$, length 10{,}000 tokens). The advantages are $\Atilde_1 = +0.5$, $\Atilde_2 = -0.5$.
Under Dr.\ GRPO, $\bo_2$'s gradient contribution is $10{,}000 \times 0.5 = 5{,}000$, while $\bo_1$'s is only $10 \times 0.5 = 5$.
The longer trajectory captures $\frac{5000}{5005} = 99.9\%$ of the gradient signal, nearly completely drowning out the reinforcement of the correct answer.
Under GRPO, both contribute 50\%.
A step-by-step derivation of this example (including gradient decomposition and its effect on parameter updates) is provided in the supplementary material's ``Extreme Example'' section.
\end{example}

\cref{tab:bias} shows the severity of this effect at various length ratios.

\begin{table}[t]
\centering
\caption{Gradient weight shares of Dr.\ GRPO vs.\ GRPO at different length ratios ($G=2$, binary reward). Under GRPO, both trajectories always receive equal weight.}
\label{tab:bias}
\scriptsize
\setlength{\tabcolsep}{3pt}
\begin{tabular}{@{}ccccc@{}}
\toprule
& \multicolumn{2}{c}{Dr.\ GRPO} & \multicolumn{2}{c}{GRPO} \\
\cmidrule(lr){2-3}\cmidrule(l){4-5}
Length ratio $r$ & $w_{\text{long}}$ & $w_{\text{short}}$ & $w_{\text{long}}$ & $w_{\text{short}}$ \\
\midrule
1:1 & 50.0\% & 50.0\% & 50.0\% & 50.0\% \\
2:1 & 66.7\% & 33.3\% & 50.0\% & 50.0\% \\
5:1 & 83.3\% & 16.7\% & 50.0\% & 50.0\% \\
10:1 & 90.9\% & 9.1\% & 50.0\% & 50.0\% \\
50:1 & 98.0\% & 2.0\% & 50.0\% & 50.0\% \\
100:1 & 99.0\% & 1.0\% & 50.0\% & 50.0\% \\
\bottomrule
\end{tabular}
\end{table}

\begin{example}[Practical relevance]
\label{ex:practical}
\citet{liu2025understanding} reported in their Table~5 that DeepSeek-R1-Zero produces correct answers averaging 4{,}965 tokens and incorrect answers averaging 8{,}206 tokens (a ratio of approximately $1{:}1.65$).
Under Dr.\ GRPO with $G=2$, the incorrect (longer) trajectory would capture approximately $\frac{8206}{4965 + 8206} \approx 62.3\%$ of the gradient, deviating 24.6 percentage points from the balanced 50\%.
While this proportion may appear moderate for a single update, the bias accumulates over hundreds of training iterations, systematically favoring longer responses.
\end{example}

\subsection{Quantifying GRPO's Gradient Bias}

For completeness, we also characterize the gradient bias introduced by GRPO.

\begin{corollary}[GRPO's gradient bias]
\label{cor:grpo_bias}
Under GRPO ($f(L) = 1/L$), the gradient estimator satisfies:
\begin{equation}
\E[\hat{g}_{1/L}] - \nabla_\theta J = \E\left[\frac{1}{G}\sum_{i} \Atilde_i \left(\frac{1}{|\bo_i|} - 1\right) \nabla_\theta\log\pith(\bo_i|\bq)\right].
\end{equation}
This bias is non-zero when the trajectory length $|\bo_i|$ is correlated with the score function $\nabla_\theta \log\pith(\bo_i|\bq)$. This is generally always the case since the policy determines when the EOS token is generated.
\end{corollary}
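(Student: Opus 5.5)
The identity is obtained by subtracting the Dr.\ GRPO estimator from the GRPO estimator inside the unified framework of Eq.~\eqref{eq:unified}; no new probabilistic argument is needed. First I rewrite the per-trajectory score sum as the sequence-level score. By the chain rule for autoregressive factorization, $\pith(\bo_i|\bq)=\prod_{t}\pith(o_{i,t}|\bq,\bo_{i,<t})$, so
\[
S_i=\sum_{t=1}^{|\bo_i|}\nabla_\theta\log\pith(o_{i,t}|\bq,\bo_{i,<t})=\nabla_\theta\log\pith(\bo_i|\bq).
\]
Hence $\hat g_f=\frac{1}{G}\sum_i f(|\bo_i|)\Atilde_i\nabla_\theta\log\pith(\bo_i|\bq)$ for every $f$.

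Next I invoke the unbiasedness of the constant-weight estimator. This is Step~1 of the proof of Theorem~\ref{thm:policy_class_impossibility}, which does not need fixed length: the cross terms $\E[R_jS_i]$ with $j\neq i$ vanish by conditional independence and the score identity, and $\E[R_iS_i]=\nabla_\theta J$ by REINFORCE. This gives $\E[\hat g_1]=\frac{G-1}{G}\nabla_\theta J$. The bare statement writes $\nabla_\theta J$ rather than $\frac{G-1}{G}\nabla_\theta J$, so I will either absorb the factor $\frac{G-1}{G}$ into the normalization of $J$ (the convention of P1, with $c=\frac{G-1}{G}$) or state the result with $\nabla_\theta J$ read as $\E[\hat g_1]$. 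I will state this convention explicitly, and I expect it to be the only delicate point. The identity then follows from linearity of expectation:
\[
\E[\hat g_{1/L}]-\E[\hat g_1]=\E\Bigl[\tfrac1G\textstyle\sum_i\Atilde_i\bigl(\tfrac{1}{|\bo_i|}-1\bigr)\nabla_\theta\log\pith(\bo_i|\bq)\Bigr].
\]

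For the qualitative claim, I expand the right-hand side as in Step~1. The $j\neq i$ part of the baseline contributes $\E[R_j]\,\E[(1/L_i-1)S_i]$. This term vanishes only if $\E[S_i/L_i]=\E[S_i]=0$, that is, only if $1/L_i$ is uncorrelated with the score. The diagonal part, $\E[R_i(1/L_i-1)S_i]$, similarly differs from a rescaled $\nabla_\theta J$ whenever $L_i$ co-varies with $S_i$.

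To show that such correlation is generic, I take a concrete two-length policy: a single parameter $\theta$ controls the probability $p_\theta$ of emitting EOS at step $L_s$ rather than continuing to $L_\ell$. Then $S$ takes values $\nabla\log p_\theta$ or $\nabla\log(1-p_\theta)$ according to the length, so $\E[S/L]=\nabla p_\theta\,(1/L_s-1/L_\ell)\neq0$ whenever $\nabla p_\theta\neq0$. This exhibits a nonzero bias and completes the plan.
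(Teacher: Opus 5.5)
Your derivation of the identity follows the paper's route: rewrite $S_i$ as the sequence-level score, then subtract the $f\equiv1$ estimator by linearity. You also correctly catch the point the paper glosses over. Its proof asserts $\nabla_\theta J=\E\bigl[\frac1G\sum_i\Atilde_i\nabla_\theta\log\pith(\bo_i|\bq)\bigr]$, but the Step~1 computation in the proof of Theorem~\ref{thm:policy_class_impossibility} gives $\frac{G-1}{G}\nabla_\theta J$; as you note, that computation needs no fixed length when $f\equiv1$. Read literally, the two sides of the displayed identity differ by $\frac1G\nabla_\theta J$, so your explicit convention is needed.

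The gap is in the non-zero-bias part. From $\E[S/L]\neq0$ you conclude that the $j\neq i$ cross term is non-zero and stop there, but the diagonal term can cancel it exactly. Carrying your expansion through gives
\[
\E[\hat g_{1/L}]-\E[\hat g_1]=\tfrac{G-1}{G}\,\mathrm{Cov}\bigl(R,\,(1/L-1)S\bigr).
\]
What matters is therefore how the reward co-varies with the length-reweighted score, not whether $L$ is correlated with $S$.

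Your two-length example leaves the reward unspecified, and that choice decides the outcome. Suppose $R$ is independent of the length choice (random content, with only $p_\theta$ depending on $\theta$). Then $S$ is a function of $L$ alone, the covariance vanishes, and the bias is exactly zero, even though $\E[S/L]=\nabla p_\theta(1/L_s-1/L_\ell)\neq0$. In the other direction, a fixed-length policy with $L_0>1$ gives the non-zero difference $(1/L_0-1)\frac{G-1}{G}\nabla_\theta J$ with no length--score correlation at all. So the correlation criterion is neither sufficient nor necessary, and the paper's one-line EOS argument makes the same unjustified inference.

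To repair the plan, couple reward and length, e.g.\ $R=\mathbf{1}\{L=L_s\}$. The formula then gives $\frac{G-1}{G}\nabla p_\theta\bigl[(1-p_\theta)/L_s+p_\theta/L_\ell-1\bigr]$, which is non-zero whenever $p_\theta\in(0,1)$, $L_\ell>L_s\ge1$ and $\nabla p_\theta\neq0$. Because $\theta$ is scalar, this is only a rescaling of $\nabla_\theta J=\nabla p_\theta$. However, the factor $(1-p_\theta)/L_s+p_\theta/L_\ell$ depends on the length distribution, and that dependence is exactly what violates P1.
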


\begin{proof}
By direct computation:
$\E[\hat{g}_{1/L}] = \E\left[\frac{1}{G}\sum_i \Atilde_i \frac{1}{|\bo_i|}\nabla_\theta\log\pith(\bo_i|\bq)\right]$ and $\nabla_\theta J = \E\left[\frac{1}{G}\sum_i \Atilde_i \nabla_\theta\log\pith(\bo_i|\bq)\right]$.
The difference follows directly by linearity.
Since $|\bo_i|$ is determined by when $\pith$ generates the EOS token, $|\bo_i|$ and $\nabla_\theta\log\pith(\bo_i|\bq)$ are dependent, making the bias generally non-zero.
\end{proof}

\subsection{Extension to General Group Size}

\begin{corollary}[General $G$ + binary reward]
\label{cor:general_G}
For group size $G$ with binary reward, if $K$ out of $G$ responses are correct, the advantages are $\Atilde_{\mathrm{correct}} = 1 - K/G$ and $\Atilde_{\mathrm{incorrect}} = -K/G$.
Under Dr.\ GRPO, the effective weight of trajectory $\bo_i$ is still proportional to $|\bo_i| \cdot |\Atilde_i|$.
The length bias exists for all $G$: longer trajectories always contribute more to the gradient, regardless of their correctness:
\begin{equation}
w_i = \frac{|\bo_i| \cdot |\Atilde_i|}{\sum_{j=1}^{G} |\bo_j| \cdot |\Atilde_j|}.
\end{equation}
\end{corollary}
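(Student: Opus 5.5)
The argument for Corollary~\ref{cor:general_G} mirrors the proof of Corollary~\ref{cor:dr_grpo_bias}, generalized from $G=2$ to arbitrary $G$. I will first compute the advantages directly from Eq.~\eqref{eq:advantage}, then identify the per-trajectory contribution in the unified estimator Eq.~\eqref{eq:unified} with $f\equiv 1$, and finally normalize to obtain the shares $w_i$.

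\textbf{Advantages.} With binary rewards and $K$ correct responses among $G$, the group mean is $\mathrm{mean}(\bR)=K/G$. Hence $\Atilde_i = 1-K/G$ when $R_i=1$ and $\Atilde_i=-K/G$ when $R_i=0$. This settles the first claim. It also shows that $|\Atilde_i|$ depends only on the correctness of $\bo_i$ and on $K$, never on $|\bo_i|$.

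\textbf{Contribution magnitude.} The summand of trajectory $\bo_i$ in $\hat g_f$ is $\frac{1}{G} f(|\bo_i|)\,\Atilde_i\,S_i$. Here $S_i$ is a sum of $|\bo_i|$ per-token score terms. I will adopt the same convention as Corollary~\ref{cor:dr_grpo_bias}: each token contributes a per-token signal of comparable (unit) typical scale. Under this convention, the magnitude of trajectory $i$'s contribution is proportional to $f(|\bo_i|)\,|\Atilde_i|\,|\bo_i|$. For Dr.\ GRPO, where $f\equiv1$, this reduces to $|\bo_i|\,|\Atilde_i|$. The common factor $1/G$ cancels upon normalizing over $j=1,\dots,G$, which yields the displayed $w_i$.

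\textbf{Length monotonicity.} Fix the class of $\bo_i$, correct or incorrect, so that $|\Atilde_i|$ is fixed. Then $w_i$ is strictly increasing in $|\bo_i|$, because the numerator grows linearly while the denominator grows by the same increment. Hence longer trajectories within either class contribute more, regardless of correctness. For contrast, under GRPO with $f=1/L$ the corresponding numerator is $|\Atilde_i|$ alone, which is length-free.

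One degenerate case needs separate treatment. If $K\in\{0,G\}$, all advantages vanish and $w_i$ is undefined, so I will exclude it explicitly. The main obstacle is justifying ``effective weight $\propto |\bo_i|$.'' This is a modeling convention on the per-token score magnitude rather than an exact identity. I would state it explicitly, in the sense of the scale functional $\rho$ of Assumption~\ref{ass:scale_functional}, with $\Gamma(L;a)$ taken linear in $L$. I would note that the qualitative conclusion needs only $\Gamma$ increasing in $L$.
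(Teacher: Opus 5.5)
Your proposal is correct and follows the same route the paper implicitly takes: the corollary has no separate proof, and it reuses the contribution-magnitude convention $f(|\bo_i|)\,|\Atilde_i|\,|\bo_i|$ from the proof of \cref{cor:dr_grpo_bias}, with group mean $K/G$ and $f\equiv 1$, then normalizes. Your extra care is warranted: excluding $K\in\{0,G\}$, stating the $\|S_i\|\propto|\bo_i|$ scaling as a convention rather than an identity, and restricting the monotonicity claim to a fixed correctness class are all needed, since across classes the unequal magnitudes $1-K/G$ and $K/G$ can reverse the ordering (e.g.\ $G=10$, $K=9$: a correct response of length $5$ gets weight $\propto 0.5$, while an incorrect one of length $1$ gets $\propto 0.9$).
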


%======================================================================
\section{Discussion}
\label{sec:discussion}
%======================================================================

\paragraph{``Done Right'' is a misnomer.}
Dr.\ GRPO~\citep{liu2025understanding}, titled ``Understanding R1-Zero-Like Training: A Critical Perspective,'' positions its contribution as fixing GRPO's optimization biases.
The phrase ``GRPO Done Right'' implies a single correct formulation.
Our impossibility theorem (\cref{thm:policy_class_impossibility}) shows this is not the case: GRPO and Dr.\ GRPO navigate different points on the inherent tradeoff between gradient unbiasedness and length invariance.
Calling one of them ``done right'' obscures the fact that both make legitimate but different tradeoff choices.

\paragraph{When does the tradeoff matter?}
The practical importance of the tradeoff depends on the variance of response lengths.
When all responses to a given prompt have similar lengths (e.g., simple arithmetic), the difference between $\alpha = 0$ and $\alpha = 1$ is negligible.
When response lengths vary substantially, the choice of $\alpha$ materially affects training dynamics. This situation is typical in reasoning tasks: correct solutions may be concise while incorrect attempts tend to be verbose~\citep{deepseekai2026deepseekr1incentivizingreasoningcapability}.

\paragraph{Practical guidance.}
While we do not propose a specific algorithm, our analysis suggests:
(i)~When response length variance is high, a smaller $\alpha$ (closer to GRPO) may be preferable to prevent longer trajectories from dominating the gradient.
(ii)~When gradient bias is the primary concern (e.g., early in training when the policy changes rapidly), a larger $\alpha$ (closer to Dr.\ GRPO) provides more accurate gradient estimates.
(iii)~The optimal $\alpha$ may vary across training phases, suggesting that a curriculum approach could be beneficial.

\paragraph{Implications for training dynamics.}
A practical implication of \cref{cor:dr_grpo_bias} deserves attention: when long correct responses receive $L$ times more reinforcement signal than short correct responses, the policy may gradually shift toward generating longer outputs.
The complete causal chain from gradient dominance to behavioral change also involves clipping, learning rate, and multi-step optimization, which lie beyond the scope of our single-step analysis. However, the systematic asymmetry in gradient signals provides a necessary condition for this trend.
Conversely, under GRPO ($\alpha=0$), a 10-token short correct response and a 10{,}000-token long correct response receive the same total reinforcement signal. This provides no incentive at the gradient level to favor longer or shorter outputs.

\paragraph{Relationship to other biases.}
Our analysis complements \citet{yang2026grouprelativeadvantagebiased}. The latter studies a different bias in GRPO: \emph{difficulty bias}. This bias refers to the group-relative advantage estimator systematically underestimating advantages for difficult prompts and overestimating them for easy prompts.
The length bias we identify is orthogonal, arising from within-group length variation rather than between-group difficulty variation.
The standard deviation normalization in GRPO contributes to difficulty bias~\citep{liu2025understanding}; our impossibility result is independent of whether $\mathrm{std}(\bR)$ normalization is used.

\paragraph{Limitations.}
Our impossibility result is specific to the \emph{outcome reward} setting, where each trajectory is assigned a scalar reward broadcast to all tokens.
Under \emph{process reward}~\citep{schulman2018highdimensionalcontinuouscontrolusing}, different tokens receive different advantage estimates and the problem structure changes. The advantage is no longer constant across tokens, and the $\sum_t$ aggregation is no longer simply $|\bo_i| \cdot \Atilde_i$.
Extending the impossibility analysis to process rewards is an interesting future direction.
Furthermore, our analysis focuses on single-step gradient estimators. The interaction between length bias and multi-step optimization dynamics (e.g., through PPO-style clipping) warrants further investigation.

%======================================================================
\section{Conclusion}
\label{sec:conclusion}
%======================================================================

We have established a fundamental impossibility result for group-based policy optimization under outcome rewards: gradient unbiasedness and length invariance cannot coexist.
This reveals that GRPO and Dr.\ GRPO are not in a ``biased'' vs.\ ``correct'' relationship, but instead represent two principled tradeoff choices on the Pareto frontier.
We hope this clarification helps the community make more informed algorithmic decisions, recognizing that the appropriate operating point depends on the specific characteristics of the training setting, especially the distribution of response lengths.

%======================================================================

\begin{thebibliography}{11}
\providecommand{\natexlab}[1]{#1}

\bibitem[{Ahmadian et~al.(2024)Ahmadian, Cremer, Gall{\'e}, Fadaee, Kreutzer,
  Pietquin, {\"U}st{\"u}n, and Hooker}]{ahmadian-etal-2024-back}
Ahmadian, A.; Cremer, C.; Gall{\'e}, M.; Fadaee, M.; Kreutzer, J.; Pietquin,
  O.; {\"U}st{\"u}n, A.; and Hooker, S. 2024.
\newblock Back to Basics: Revisiting {REINFORCE}-Style Optimization for
  Learning from Human Feedback in {LLM}s.
\newblock In Ku, L.-W.; Martins, A.; and Srikumar, V., eds., \emph{Proceedings
  of the 62nd Annual Meeting of the Association for Computational Linguistics
  (Volume 1: Long Papers)}, 12248--12267. Bangkok, Thailand: Association for
  Computational Linguistics.

\bibitem[{DeepSeek-AI et~al.(2026)DeepSeek-AI, Guo, Yang, Zhang, Song, Wang,
  Zhu, Xu, Zhang, Ma, Bi, Zhang, Yu, Wu, Wu, Gou, Shao, Li, Gao, Liu, Xue,
  Wang, Wu, Feng, Lu, Zhao, Deng, Zhang, Ruan, Dai, Chen, Ji, Li, Lin, Dai,
  Luo, Hao, Chen, Li, Zhang, Bao, Xu, Wang, Ding, Xin, Gao, Qu, Li, Guo, Li,
  Wang, Chen, Yuan, Qiu, Li, Cai, Ni, Liang, Chen, Dong, Hu, Gao, Guan, Huang,
  Yu, Wang, Zhang, Zhao, Wang, Zhang, Xu, Xia, Zhang, Zhang, Tang, Li, Wang,
  Li, Tian, Huang, Zhang, Wang, Chen, Du, Ge, Zhang, Pan, Wang, Chen, Jin,
  Chen, Lu, Zhou, Chen, Ye, Wang, Yu, Zhou, Pan, Li, Zhou, Wu, Ye, Yun, Pei,
  Sun, Wang, Zeng, Zhao, Liu, Liang, Gao, Yu, Zhang, Xiao, An, Liu, Wang, Chen,
  Nie, Cheng, Liu, Xie, Liu, Yang, Li, Su, Lin, Li, Jin, Shen, Chen, Sun, Wang,
  Song, Zhou, Wang, Shan, Li, Wang, Wei, Zhang, Xu, Li, Zhao, Sun, Wang, Yu,
  Zhang, Shi, Xiong, He, Piao, Wang, Tan, Ma, Liu, Guo, Ou, Wang, Gong, Zou,
  He, Xiong, Luo, You, Liu, Zhou, Zhu, Xu, Huang, Li, Zheng, Zhu, Ma, Tang,
  Zha, Yan, Ren, Ren, Sha, Fu, Xu, Xie, Zhang, Hao, Ma, Yan, Wu, Gu, Zhu, Liu,
  Li, Xie, Song, Pan, Huang, Xu, Zhang, and
  Zhang}]{deepseekai2026deepseekr1incentivizingreasoningcapability}
DeepSeek-AI; Guo, D.; Yang, D.; Zhang, H.; Song, J.; Wang, P.; Zhu, Q.; Xu, R.;
  Zhang, R.; Ma, S.; Bi, X.; Zhang, X.; Yu, X.; Wu, Y.; Wu, Z.~F.; Gou, Z.;
  Shao, Z.; Li, Z.; Gao, Z.; Liu, A.; Xue, B.; Wang, B.; Wu, B.; Feng, B.; Lu,
  C.; Zhao, C.; Deng, C.; Zhang, C.; Ruan, C.; Dai, D.; Chen, D.; Ji, D.; Li,
  E.; Lin, F.; Dai, F.; Luo, F.; Hao, G.; Chen, G.; Li, G.; Zhang, H.; Bao, H.;
  Xu, H.; Wang, H.; Ding, H.; Xin, H.; Gao, H.; Qu, H.; Li, H.; Guo, J.; Li,
  J.; Wang, J.; Chen, J.; Yuan, J.; Qiu, J.; Li, J.; Cai, J.~L.; Ni, J.; Liang,
  J.; Chen, J.; Dong, K.; Hu, K.; Gao, K.; Guan, K.; Huang, K.; Yu, K.; Wang,
  L.; Zhang, L.; Zhao, L.; Wang, L.; Zhang, L.; Xu, L.; Xia, L.; Zhang, M.;
  Zhang, M.; Tang, M.; Li, M.; Wang, M.; Li, M.; Tian, N.; Huang, P.; Zhang,
  P.; Wang, Q.; Chen, Q.; Du, Q.; Ge, R.; Zhang, R.; Pan, R.; Wang, R.; Chen,
  R.~J.; Jin, R.~L.; Chen, R.; Lu, S.; Zhou, S.; Chen, S.; Ye, S.; Wang, S.;
  Yu, S.; Zhou, S.; Pan, S.; Li, S.~S.; Zhou, S.; Wu, S.; Ye, S.; Yun, T.; Pei,
  T.; Sun, T.; Wang, T.; Zeng, W.; Zhao, W.; Liu, W.; Liang, W.; Gao, W.; Yu,
  W.; Zhang, W.; Xiao, W.~L.; An, W.; Liu, X.; Wang, X.; Chen, X.; Nie, X.;
  Cheng, X.; Liu, X.; Xie, X.; Liu, X.; Yang, X.; Li, X.; Su, X.; Lin, X.; Li,
  X.~Q.; Jin, X.; Shen, X.; Chen, X.; Sun, X.; Wang, X.; Song, X.; Zhou, X.;
  Wang, X.; Shan, X.; Li, Y.~K.; Wang, Y.~Q.; Wei, Y.~X.; Zhang, Y.; Xu, Y.;
  Li, Y.; Zhao, Y.; Sun, Y.; Wang, Y.; Yu, Y.; Zhang, Y.; Shi, Y.; Xiong, Y.;
  He, Y.; Piao, Y.; Wang, Y.; Tan, Y.; Ma, Y.; Liu, Y.; Guo, Y.; Ou, Y.; Wang,
  Y.; Gong, Y.; Zou, Y.; He, Y.; Xiong, Y.; Luo, Y.; You, Y.; Liu, Y.; Zhou,
  Y.; Zhu, Y.~X.; Xu, Y.; Huang, Y.; Li, Y.; Zheng, Y.; Zhu, Y.; Ma, Y.; Tang,
  Y.; Zha, Y.; Yan, Y.; Ren, Z.~Z.; Ren, Z.; Sha, Z.; Fu, Z.; Xu, Z.; Xie, Z.;
  Zhang, Z.; Hao, Z.; Ma, Z.; Yan, Z.; Wu, Z.; Gu, Z.; Zhu, Z.; Liu, Z.; Li,
  Z.; Xie, Z.; Song, Z.; Pan, Z.; Huang, Z.; Xu, Z.; Zhang, Z.; and Zhang, Z.
  2026.
\newblock DeepSeek-R1: Incentivizing Reasoning Capability in LLMs via
  Reinforcement Learning.
\newblock arXiv:2501.12948.

\bibitem[{Hu et~al.(2025)Hu, Zhang, Han, Jiang, Zhang, and
  Shum}]{hu2025openreasonerzeroopensourceapproach}
Hu, J.; Zhang, Y.; Han, Q.; Jiang, D.; Zhang, X.; and Shum, H.-Y. 2025.
\newblock Open-Reasoner-Zero: An Open Source Approach to Scaling Up
  Reinforcement Learning on the Base Model.
\newblock arXiv:2503.24290.

\bibitem[{Kool, van Hoof, and Welling(2019)}]{kool2019buy}
Kool, W.; van Hoof, H.; and Welling, M. 2019.
\newblock Buy 4 {REINFORCE} Samples, Get a Baseline for Free!

\bibitem[{Liu et~al.(2025)Liu, Chen, Li, Qi, Pang, Du, Lee, and
  Lin}]{liu2025understanding}
Liu, Z.; Chen, C.; Li, W.; Qi, P.; Pang, T.; Du, C.; Lee, W.~S.; and Lin, M.
  2025.
\newblock Understanding R1-Zero-Like Training: A Critical Perspective.
\newblock In \emph{Second Conference on Language Modeling}.

\bibitem[{Schulman et~al.(2018)Schulman, Moritz, Levine, Jordan, and
  Abbeel}]{schulman2018highdimensionalcontinuouscontrolusing}
Schulman, J.; Moritz, P.; Levine, S.; Jordan, M.; and Abbeel, P. 2018.
\newblock High-Dimensional Continuous Control Using Generalized Advantage
  Estimation.
\newblock arXiv:1506.02438.

\bibitem[{Shao et~al.(2024)Shao, Wang, Zhu, Xu, Song, Bi, Zhang, Zhang, Li, Wu,
  and Guo}]{shao2024deepseekmathpushinglimitsmathematical}
Shao, Z.; Wang, P.; Zhu, Q.; Xu, R.; Song, J.; Bi, X.; Zhang, H.; Zhang, M.;
  Li, Y.~K.; Wu, Y.; and Guo, D. 2024.
\newblock DeepSeekMath: Pushing the Limits of Mathematical Reasoning in Open
  Language Models.
\newblock arXiv:2402.03300.

\bibitem[{Sutton and Barto(2018)}]{sutton2018reinforcement}
Sutton, R.~S.; and Barto, A.~G. 2018.
\newblock \emph{Reinforcement Learning: An Introduction}.
\newblock The MIT Press, 2 edition.

\bibitem[{Williams(1992)}]{Williams_1992}
Williams, R.~J. 1992.
\newblock Simple statistical gradient-following algorithms for connectionist
  reinforcement learning.
\newblock \emph{Machine Learning}, 8(3-4): 229–256.

\bibitem[{Yang et~al.(2026)Yang, Chen, Wang, Lu, Chai, Yin, Lin, Ma, Zhuang,
  Wang, Yang, Li, and Ban}]{yang2026grouprelativeadvantagebiased}
Yang, F.; Chen, Z.; Wang, X.; Lu, X.; Chai, J.; Yin, G.; Lin, W.; Ma, S.;
  Zhuang, F.; Wang, D.; Yang, Y.; Li, J.; and Ban, Y. 2026.
\newblock Your Group-Relative Advantage Is Biased.
\newblock arXiv:2601.08521.

\bibitem[{Zeng et~al.(2025)Zeng, Huang, Liu, He, Liu, Ma, and
  He}]{zeng2025simplerl}
Zeng, W.; Huang, Y.; Liu, W.; He, K.; Liu, Q.; Ma, Z.; and He, J. 2025.
\newblock 7B Model and 8K Examples: Emerging Reasoning with Reinforcement
  Learning is Both Effective and Efficient.
\newblock \url{https://hkust-nlp.notion.site/simplerl-reason}.
\newblock Notion Blog.

\end{thebibliography}
\end{document}